\newdefinition{definition}{Definition}
\newdefinition{example}{Example}
\newdefinition{remark}{Remark}
\newtheorem{theorem}{Theorem}[section]
\newdefinition{property}{Property}
\journal{arXiv.org}
\begin{document}

\begin{frontmatter}



\title{Basic concepts, definitions, and methods in D number theory}

\author[NWPU]{Xinyang Deng\corref{COR}}
\ead{xinyang.deng@nwpu.edu.cn}

\cortext[COR]{Corresponding author.}

\address[NWPU]{School of Electronics and Information, Northwestern Polytechnical University, Xi'an 710072, China}

\begin{abstract}
As a generalization of Dempster-Shafer theory, D number theory (DNT) aims to provide a framework to deal with uncertain information with non-exclusiveness and incompleteness. Although there are some advances on DNT in previous studies, however, they lack of systematicness, and many important issues have not yet been solved. In this paper, several crucial aspects in constructing a perfect and systematic framework of DNT are considered. At first the non-exclusiveness in DNT is formally defined and discussed. Secondly, a method to combine multiple D numbers is proposed by extending previous exclusive conflict redistribution (ECR) rule. Thirdly, a new pair of belief and plausibility measures for D numbers are defined and many desirable properties are satisfied by the proposed measures. Fourthly, the combination of information-incomplete D numbers is studied specially to show how to deal with the incompleteness of information in DNT. In this paper, we mainly give relative math definitions, properties, and theorems, concrete examples and applications will be considered in the future study.
\end{abstract}

\begin{keyword}
Non-exclusiveness \sep Combination rule \sep Belief measure \sep Plausibility measure \sep D number theory \sep Dempster-Shafer theory
\end{keyword}
\end{frontmatter}

\section{Introduction}
Dempster-Shafer theory (DST) \cite{Dempster1967,shafer1976mathematical}, also called evidence theory or belief function theory, is one of the most popular theories for dealing with uncertain information, and has been widely used in many fields. Although DST has many advantages in representing and dealing with uncertainty, but it is limited by some hypotheses and constraints that are hardly satisfied in some situation \cite{yen1990generalizing,smets1994transferable,dezert2002foundations,dezert2009introduction}. There are two main aspects. First, in DST a frame of discernment (FOD) must be composed of mutually exclusive elements, which is called the FOD's exclusiveness hypothesis. Second, in DST the sum of basic probabilities or belief $m(.)$ in a basic probability assignment (BPA) must be 1 (or basic probabilities can not be assigned to elements outside the FOD), which is called the BPA's completeness constraint.

To overcome the above-mentioned limitations in DST, a new generalization of DST, called D number theory (DNT), has been proposed in recently \cite{xydeng2017DNCR,deng2019d} for the fusion of uncertain information with non-exclusiveness and incompleteness. The theory of DNT stems from the concept of D numbers \cite{deng2012d,deng2014supplier411,XDengEIA2014,deng2015d,xiao2018novel,li2018dCC,seiti2019developing,mo2018new}, and aims to build a more sophisticated framework for representing and reasoning with uncertain information similar to DST from a generic set-membership perspective, in which DNT relaxes the exclusiveness constraint of elements in FOD and completeness assumption of BPA in DST.

DNT is a developing theory. In previous studies, the definition of D numbers, combination rule and uncertainty measure for D numbers, and others have been studied one after another \cite{xydeng2017DNCR,deng2017fuzzy,deng2019evaluating,IJISTUDNumbers,li2016novel}. Although there are some advances on DNT, however, they lack of systematicness, and many important issues have not yet been solved. Especially, these key aspects still remain unsolved: (1) How to formally describe and represent the non-exclusiveness; (2) How to effectively combine multiple D numbers; (3) Lack of a pair of desirable belief and plausibility measures for D numbers; (4) How to appropriately handle information-incomplete D numbers. These aspects are crucial components in establishing a perfect and systematic framework of DNT. In this paper, we center on these aspects, and propose many new concepts, definitions, and methods to improve the theoretical framework of DNT. Concretely, at first the non-exclusiveness in DNT is formally defined and discussed. Secondly, a method to combine multiple D numbers is proposed by extending previous exclusive conflict redistribution (ECR) rule \cite{xydeng2017DNCR,deng2019d}. Thirdly, a new pair of belief and plausibility measures for D numbers are defined and some basic properties are proved. Fourthly, the combination of information-incomplete D numbers is studied specially to show how to deal with the incompleteness of information in DNT. With the studies of this paper, DNT is much more close to a compatible generalization of DST.

The rest of this paper is organized as follows. Section \ref{SectBasicsDST} gives a brief introduction about DST. In Section \ref{SectionProposed}, several important aspects in the framework of DNT are studied. Finally, Section \ref{SectConclusion} concludes the paper.

\section{Basics of Dempster-Shafer theory}\label{SectBasicsDST}
In this section, some basic definitions and concepts about DST are given as below \cite{shafer1976mathematical}.

Let $\Omega$ be a set of $N$ mutually exclusive and collectively exhaustive events, indicated by
\begin{equation}
\Omega  = \{ q_1 ,q_2 , \cdots ,q_i , \cdots ,q_N \}
\end{equation}
where set $\Omega$ is called a frame of discernment (FOD). The power set of $\Omega$ is indicated by $2^\Omega$, namely
\begin{equation}
2^\Omega   = \{ \emptyset ,\{ q_1 \} , \cdots ,\{ q_N \} ,\{ q_1
,q_2 \} , \cdots ,\{ q_1 ,q_2 , \cdots ,q_i \} , \cdots ,\Omega \}.
\end{equation}
The elements of $2^\Omega$ or subsets of $\Omega$ are called propositions.

\begin{definition}
Let a FOD be $\Omega = \{ q_1 ,q_2 , \cdots, q_N \}$, a mass function, or basic probability assignment (BPA), defined on $\Omega$ is a mapping $m$ from  $2^\Omega$ to $[0,1]$, formally defined by:
\begin{equation}
m: \quad 2^\Omega \to [0,1]
\end{equation}
which satisfies the following condition:
\begin{eqnarray}
m(\emptyset ) = 0 \quad {\rm{and}} \quad \sum\limits_{A \subseteq \Omega }{m(A) = 1}
\end{eqnarray}
\end{definition}

\begin{definition}
Given a BPA, its associated belief measure $Bel_{m}$ and plausibility measure $Pl_{m}$ express the lower bound and upper bound of the support degree to each proposition in that BPA, respectively. They are defined as
\begin{equation}\label{EqBelMeasureDST}
Bel_{m} (A) = \sum\limits_{B \subseteq A} {m(B)}
\end{equation}
\begin{equation}\label{EqPlMeasureDST}
Pl_{m}(A) = \sum\limits_{B \cap A \ne \emptyset }{m(B)}
\end{equation}
\end{definition}

Obviously, $Pl_{m}(A) \ge Bel_{m}(A)$ for each $A \subseteq \Omega$, and $[Bel_{m}(A), Pl_{m}(A)]$ is called the belief interval of $A$ in $m$.

As a theory for uncertain information fusion, DST provides a basic combination rule called Dempster's rule to fuse multiple BPAs from independent information sources. This rule is formally defined as follows.

\begin{definition}
Assume there are two BPAs denoted as $m_1$ and $m_2$, let $m$ be the combination result by Dempster's rule, denoted as $m = m1 \oplus m_2$, then
\begin{equation}
m(A) = \left\{ {\begin{array}{*{20}l}
   {\frac{1}{{1 - K}}\sum\limits_{B \cap C = A} {m_1 (B)m_2 (C)} \;,} & {A \ne \emptyset ;}  \\
   {0\;,} & {A = \emptyset }.  \\
\end{array}} \right.
\end{equation}
with
\begin{equation}\label{K}
    K=\sum\limits_{B\bigcap C=\emptyset}m_1(B)m_{2}(C)
\end{equation}
\end{definition}

\section{D number theory as a compatible generalization of DST}\label{SectionProposed}
D number theory (DNT) is a new theoretical framework for uncertainty reasoning that has generalized DST to the situation of non-exclusive and incomplete information. In this section, the basic definition about D numbers proposed in previous studies is introduced firstly. Then, the non-exclusiveness in DNT is studied, a formal definition about the non-exclusiveness is presented, and relative theorems are exhibited. Thirdly, as for the combination of multiple D numbers, a new ECR rule is proposed to implement the combination of multiple D numbers. Next, a new pair of belief and plausibility measures for D numbers are proposed and their basic properties are proved. At last, the combination of information-incomplete D numbers is studied and discussed specially.

\subsection{Definition of a D number}
D number is a new model to represent uncertain information which relaxes the exclusiveness assumption and completeness constraint of DST \cite{deng2012d,deng2014supplier411,deng2019d}.

\begin{definition}\label{DefDNumbers}
Let $\Theta$ be a nonempty finite set $\Theta  = \{ \theta_1 ,\theta_2 , \cdots ,\theta_N \}$, a D number is a mapping formulated by
\begin{equation}
D: 2^{\Theta} \to [0,1]
\end{equation}
with
\begin{eqnarray}
\sum\limits_{B \subseteq \Theta } {D(B)} \le 1  \quad {\rm {and}} \quad
D(\emptyset ) = 0
\end{eqnarray}
where $\emptyset$ is the empty set and $B$ is a subset of $\Theta$.
\end{definition}

If $\sum\limits_{B \subseteq \Theta } {D(B) = 1}$, the corresponding D number is information-complete. Otherwise, if $\sum\limits_{B \subseteq \Theta } {D(B) < 1}$ the D number is information-incomplete. The information integrity in a D number can be simply expressed by a Q value:

\begin{equation}
Q(D) = \sum\limits_{B \subseteq \Theta } {D(B)}
\end{equation}

In the following of this paper, we will first study relative concepts and definitions of DNT in the case of complete information, and later specifically give a subsection to discuss the case of incomplete information. But the proposed concepts, definitions and methods for information-complete D numbers can be naturally and easily generalized to the case of information-incomplete D numbers.

\subsection{Non-exclusiveness in DNT}
In DNT, the elements in FOD $\Theta$ are not required to be mutually exclusive. Regarding the non-exclusiveness in DNT, a membership function is developed to measure the non-exclusive degrees between elements in $\Theta$ \cite{xydeng2017DNCR}. In this paper, an axiomatic definition about the non-exclusive degree is proposed as below.

\begin{definition}\label{DefNonExclusiveness}
Given a nonempty set $\Theta$, for any $B_i ,B_j  \in 2^{\Theta}$ the non-exclusive degree between $B_i$ and $B_j$ is characterized by a mapping
\begin{equation}
u :2^{\Theta}   \times 2^{\Theta}   \to [0,1]
\end{equation}
satisfying the following properties

(P1) Generalized compatibility: $u (B_i ,B_j ) = 1$ if $B_i  \cap B_j  \ne \emptyset$, $u (B_i ,B_j ) = p$ where $0 \le p \le 1$ if $B_i  \cap B_j  = \emptyset$.

(P2) Symmetry: $u (B_i ,B_j ) = u (B_j ,B_i )$.

(P3) Monotonicity: $u(A,B_i) \le u(A,B_j)$ if $B_i \subseteq B_j$.

(P4) Triangle inequality: $u(A,B_i) + u(A,B_j) \ge u(A,B_i \cup B_j)$ for any $A, B_i, B_j \subseteq \Theta$.

(P5) Zero element $\emptyset$: $u(A,\emptyset) = u(\emptyset, \emptyset) = 0$ for $A \subseteq \Theta$.
\end{definition}

In the above definition, the five properties constitute a smallest set of necessary requirements for a rational non-exclusive degree $u$. Generalized compatibility brings about an essential difference between DNT and DST. In DST, if $B_i  \cap B_j  = \emptyset$ then the two sets are definitely incompatible, namely they are mutually exclusive. By contrast, in DNT since $u (B_i ,B_j ) = p$ in which $p$ can be a non-zero value is allowed for $B_i  \cap B_j  = \emptyset$, the exclusiveness assumption of DST is relaxed entirely, moreover the degree of non-exclusiveness between $B_i$ and $B_j$ is quantified by $p$. In this sense, DST is a special case of DNT in which generalized compatibility is replaced by classical compatibility. The rest properties are easy understood and accepted. For instance, given a set $A$ its non-exclusive degree with another set $B$ is certainly increasing with the size of $B$, which is reflected by the monotonicity. In addition, triangle inequality is also required naturally because these relationships are definitely satisfied by $u$: $u(A,A) + u(A,\bar A) \ge u(A, \Theta )$ where the universe is $\Theta$, $u(A,B) + u(A,B) \ge u(A,B \cup B)$ for any $A,B \subseteq \Theta$.

According to above definition about non-exclusive degree $u$, all non-exclusive degrees $u (B_i ,B_j )$, $B_i ,B_j  \subseteq {\Theta}$ and $B_i ,B_j  \ne {\emptyset}$, make up a $({2^{|\Theta |}} - 1) \times ({2^{|\Theta |}} - 1)$ symmetrical matrix ${\bf{U}}$. With respect to the matrix of non-exclusive degrees ${\bf{U}}$ in which assume $|\Theta| \ge 2$, the following theorems are considerable.

\begin{theorem}\label{ThermUfullRank}
$Rank({\bf{U}}) = {2^{|\Theta |}} - 1$ if and only if $\forall A \subseteq \Theta $ and $A \ne \emptyset$ there is $u(A, \bar A) \ne 1$.
\end{theorem}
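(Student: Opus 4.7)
The plan is to reduce the full-rank question about $\mathbf{U}$ to the non-vanishing of a product of diagonal entries of a triangular submatrix indexed by the Boolean lattice of $\Theta$. The key structural observation is that property (P1) forces $u(A,B)=1$ whenever $A\cap B\ne\emptyset$, so I would split, for any vector $c\in\mathbb{R}^{2^{|\Theta|}-1}$ indexed by nonempty subsets,
\begin{equation*}
(\mathbf{U}c)_A \;=\; \sum_{\emptyset\ne B\subseteq\Theta} c_B \;+\; \sum_{\emptyset\ne B\subseteq\bar A}\bigl(u(A,B)-1\bigr)\,c_B.
\end{equation*}
Evaluating at $A=\Theta$ (where $\bar A=\emptyset$) yields the scalar identity $S:=\sum_B c_B=0$, which then cancels the first summand in every other equation. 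Hence $\mathbf{U}c=0$ is equivalent to the constraint $c_\Theta=-\sum_{\emptyset\ne B\ne\Theta}c_B$ together with the reduced system
\begin{equation*}
\sum_{\emptyset\ne B\subseteq\bar A}\bigl(u(A,B)-1\bigr)\,c_B \;=\;0\qquad\text{for every}\;\emptyset\ne A\subsetneq\Theta.
\end{equation*}

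The next step is to reindex by $D=\bar A$ and use symmetry (P2) to rewrite this reduced system as $M'c'=0$, where $c'$ denotes the restriction of $c$ to nonempty proper subsets of $\Theta$ and the reduced matrix has entries
\begin{equation*}
M'_{D,B} \;=\; \bigl(u(\bar D,B)-1\bigr)\,\mathbf{1}[B\subseteq D],
\qquad\text{so}\qquad M'_{D,D}=u(D,\bar D)-1.
\end{equation*}
The crucial point is that $M'_{D,B}=0$ whenever $B\not\subseteq D$, so in any total order on nonempty proper subsets that refines the inclusion partial order, $M'$ is lower triangular, and
\begin{equation*}
\det M' \;=\; \prod_{\emptyset\ne D\subsetneq\Theta}\bigl(u(D,\bar D)-1\bigr).
\end{equation*}
Since the map $c\mapsto c'$ is a linear bijection between $\ker\mathbf{U}$ and $\ker M'$ (with $c_\Theta$ recovered from $S=0$), $\mathbf{U}$ has rank $2^{|\Theta|}-1$ iff $\det M'\ne 0$ iff $u(D,\bar D)\ne 1$ for every nonempty proper $D\subsetneq\Theta$. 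Property (P5) automatically gives $u(\Theta,\bar\Theta)=u(\Theta,\emptyset)=0\ne 1$, so including $A=\Theta$ in the quantifier is vacuous, and the biconditional of the theorem follows in a single calculation.

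The main obstacle I anticipate is essentially bookkeeping: tracking the single ``extra'' coordinate $c_\Theta$ and verifying that it can be eliminated cleanly via $S=0$ without polluting the reduced system. This requires the easy but pivotal observation that $B=\Theta$ never appears as a column index in the reduced equations, since the constraint $B\subseteq\bar A$ with $A$ nonempty forbids $B=\Theta$. Once this reduction is in place, the triangularity of $M'$ and the determinant formula are immediate, and it is worth noting that monotonicity (P3) and the triangle inequality (P4) play no role whatsoever in the rank criterion itself.
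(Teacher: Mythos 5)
Your proposal is correct, and it in fact supplies the key computation that the paper's own proof leaves unproved. The paper simply asserts the determinant identity $|\mathbf{U}|=\prod_{i=1}^{n}(1-p_i)^2$ over complementary pairs and then invokes ``full rank iff nonzero determinant'' (as stated the identity is even off by a sign for $|\Theta|=2$, where $\det\mathbf{U}=-(1-p)^2$, though this does not affect the rank conclusion). You reach the same structural fact --- that only the complementary-pair values $u(A,\bar A)$ matter --- by a kernel argument instead of a direct determinant evaluation: (P1) isolates the all-ones contribution, the equation at $A=\Theta$ forces $S=\sum_B c_B=0$, and the reduced matrix $M'_{D,B}=(u(\bar D,B)-1)\,\mathbf{1}[B\subseteq D]$ is triangular in any linear extension of inclusion, with diagonal entries $u(D,\bar D)-1$; the bijection $\ker\mathbf{U}\cong\ker M'$ (with $c_\Theta$ recovered from $S=0$, and $B=\Theta$ correctly excluded from the reduced columns) then gives the equivalence. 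This is essentially a rigorous version of the row reduction (subtract the all-ones row indexed by $\Theta$) that implicitly underlies the paper's determinant claim; what your route buys is a complete argument with no sign bookkeeping, plus the explicit observations that only (P1), (P2), (P5) are needed, that the off-diagonal disjoint values $u(B_i,B_j)$ with $B_j\neq\bar B_i$ drop out, and that the case $A=\Theta$ in the theorem's quantifier is vacuous by (P5). What the paper's version buys is only brevity, at the cost of leaving its central identity unjustified.
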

\begin{proof}
For a nonempty set $\Theta$, there are $n = (2^{|\Theta|} - 2) / 2$ pairs of complementary sets $(A, \bar A)$ with $A \ne \emptyset$. Assume $u(A, \bar A) = p_i$, where $i = 1,\cdots, n$ and $0 \le p_i < 1$, we can obtain
\[
|{\bf{U}}| = \prod\limits_{i = 1}^n {(1 - {p_i})^2}
\]

Since for any symmetrical matrix {\bf{M}} it is of full rank if and only if $|{\bf{M}}| \ne 0$, hence $Rank({\bf{U}}) = {2^{|\Theta |}} - 1$ if and only if $|{\bf{U}}| \ne 0$, i.e., $p_i \ne 1$, $i = 1,\cdots, n$, namely $u(A, \bar A) \ne 1$ for any nonempty $A \subseteq \Theta$.
\end{proof}

\begin{theorem}\label{ThermUnotfullRank}
$Rank({\bf{U}}) < {2^{|\Theta |}} - 1$ if and only if $\exists A \subseteq \Theta $ and $A \ne \emptyset$ there is $u(A, \bar A) = 1$.
\end{theorem}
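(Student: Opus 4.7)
The plan is to obtain Theorem \ref{ThermUnotfullRank} as the logical contrapositive of Theorem \ref{ThermUfullRank}, since the two statements exhaust the dichotomy for the rank of $\mathbf{U}$. The matrix $\mathbf{U}$ is of size $(2^{|\Theta|}-1) \times (2^{|\Theta|}-1)$, so its rank either equals $2^{|\Theta|}-1$ or is strictly less than $2^{|\Theta|}-1$; there is no third possibility. Theorem \ref{ThermUfullRank} already characterizes the full-rank case as ``$\forall A \subseteq \Theta$ with $A \ne \emptyset$, $u(A,\bar A) \ne 1$'', so negating both sides of that biconditional immediately yields ``$Rank(\mathbf{U}) < 2^{|\Theta|}-1$ iff $\exists A \subseteq \Theta$ with $A \ne \emptyset$ such that $u(A,\bar A) = 1$'', which is exactly the claim.

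Concretely, I would first write one short paragraph invoking Theorem \ref{ThermUfullRank} and the rank dichotomy, and then explicitly negate each side of the biconditional (``not full rank'' $\Longleftrightarrow$ ``not [for all $A$, $u(A,\bar A) \ne 1$]'' $\Longleftrightarrow$ ``there exists $A$ with $u(A,\bar A) = 1$''). To make the argument self-contained I would also offer a one-line alternative using the determinant formula from the proof of Theorem \ref{ThermUfullRank}: since the $n = (2^{|\Theta|}-2)/2$ complementary pairs $(A,\bar A)$ give
\[
|\mathbf{U}| = \prod_{i=1}^{n}(1-p_i)^2, \qquad p_i = u(A_i,\bar A_i),
\]
the matrix fails to have full rank precisely when some factor vanishes, i.e.\ when some $p_i = 1$.

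There is essentially no obstacle here: the theorem is the formal negation of the preceding one, and both directions of the biconditional follow for free once the rank dichotomy and the determinant identity are in hand. The only care needed is to make sure the quantifier negation is stated cleanly (``not for all'' becomes ``there exists''), and to confirm that the complementary-pair enumeration used in Theorem \ref{ThermUfullRank} covers all off-diagonal relations that could force a linear dependence; this is already established in that proof via the product formula for $|\mathbf{U}|$, so no new combinatorial work is required.
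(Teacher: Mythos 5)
Your proposal is correct and takes essentially the same route as the paper: the paper dispatches this theorem by remarking that it follows just as Theorem \ref{ThermUfullRank} does, via the fact that a square matrix fails to have full rank exactly when its determinant $|\mathbf{U}|=\prod_{i=1}^{n}(1-p_i)^2$ vanishes, i.e.\ when some $p_i=1$. Your framing of it as the negation of the biconditional in Theorem \ref{ThermUfullRank} (using the rank dichotomy), together with the same determinant identity as a fallback, is just a cleaner packaging of that same argument.
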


Since for any symmetrical matrix {\bf{M}} it is not of full rank if and only if $|{\bf{M}}| = 0$, the above theorem \ref{ThermUnotfullRank} can be proved as similar as the proof of Theorem \ref{ThermUfullRank}.

\begin{theorem}\label{ThermUfuzzyR}
Let $R$ be a fuzzy relation on a nonempty $\Theta$ satisfying

(1) Symmetry: $R(\theta_i,\theta_j) = R(\theta_j, \theta_i)$

(2) Binary reflexivity: $R(\theta,\theta) = 1$

(3) Normalization: $R(\theta_i,\theta_j) \in [0,1]$

\noindent for any non-empty $\theta, \theta_i,\theta_j \in \Theta$, and assume $R(\theta, \emptyset) = R(\emptyset, \emptyset) = 0$. Then, $u$ is a function about non-exclusive degrees between subsets of $\Theta$, which satisfies all properties in Definition \ref{DefNonExclusiveness}, if
\begin{equation}
u({B_i},{B_j}) = \mathop {\max }\limits_{{\theta _i} \in {B_i},{\theta _j} \in {B_j}} \{ \; R({\theta _i},{\theta _j}) \; \} \end{equation}
where $B_i, B_j \subseteq \Theta$.
\end{theorem}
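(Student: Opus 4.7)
The plan is to verify each of the five properties (P1)--(P5) from Definition \ref{DefNonExclusiveness} directly from the $\max$-based definition of $u$, leveraging the four assumptions on the fuzzy relation $R$. The proof will be essentially routine; the only step requiring a small extra argument is the triangle inequality (P4), and the $\emptyset$ bookkeeping in (P5).

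First I would dispose of (P1) by splitting on whether $B_i \cap B_j$ is empty. If some $\theta \in B_i \cap B_j$ exists, then binary reflexivity yields $R(\theta,\theta) = 1$, so the maximum defining $u(B_i,B_j)$ is forced up to $1$; combined with the normalization $R(\theta_i,\theta_j) \in [0,1]$ this gives $u(B_i,B_j) = 1$. If $B_i \cap B_j = \emptyset$, then the maximum is still a supremum of numbers in $[0,1]$, and hence lies in $[0,1]$, which is exactly the $u(B_i,B_j) = p$ with $0 \le p \le 1$ clause. Property (P2) is immediate from the symmetry hypothesis on $R$ after relabeling the indices in the $\max$. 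Property (P3) follows by observing that if $B_i \subseteq B_j$, then the set of pairs $(\theta, \theta_i)$ over which we maximize for $u(A,B_i)$ is contained in the set of pairs over which we maximize for $u(A,B_j)$, so taking the $\max$ over the larger set can only be larger.

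For (P4), I would use the decomposition
\[
u(A, B_i \cup B_j) \;=\; \max_{\theta \in A,\ \theta' \in B_i \cup B_j} R(\theta, \theta') \;=\; \max\bigl\{\, u(A,B_i),\, u(A,B_j)\,\bigr\}.
\]
Since both $u(A,B_i)$ and $u(A,B_j)$ are nonnegative (values of $R$ lie in $[0,1]$), the maximum of the two is bounded above by their sum, yielding the required inequality.

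Finally, (P5) is handled by the boundary convention $R(\theta,\emptyset) = R(\emptyset,\emptyset) = 0$ adopted in the hypotheses: when one of the arguments is $\emptyset$, the $\max$ expression is vacuously taken to be $0$ in accordance with this convention. The main (mild) obstacle in the whole argument is making the empty-set case in (P5) and the decomposition step in (P4) precise; once those are clearly stated, nothing else is more than a one-line appeal to the defining properties of $R$.
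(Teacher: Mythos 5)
Your verification is correct: each of (P1)--(P5) follows from the $\max$ formula exactly as you argue, with the reflexivity/normalization of $R$ giving (P1), symmetry giving (P2), monotonicity of $\max$ over nested index sets giving (P3), the decomposition $u(A,B_i\cup B_j)=\max\{u(A,B_i),u(A,B_j)\}\le u(A,B_i)+u(A,B_j)$ giving (P4), and the convention $R(\theta,\emptyset)=R(\emptyset,\emptyset)=0$ giving (P5). The paper itself omits the proof, stating only that the theorem ``is easily proved,'' and your argument is precisely the routine verification that was intended, so it matches the paper's (implicit) approach.
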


Theorem \ref{ThermUfuzzyR} is easily proved and provides a simple method to construct the matrix of non-exclusive degrees based on a fuzzy relation about the non-exclusive degree between elements of $\Theta$. Moreover, if the underlying fuzzy relation $R$ meets $R(\theta_i,\theta_j) \ne 1$ if $\theta_i \ne \theta_j$, then the constructed matrix ${\bf{U}}$ is of full rank.

\subsection{Combination rules in DNT}
DST is often used to combining multiple information from independent sources, as an effective theory of uncertain information fusion. As a generalization of DST, DNT is also required to have the ability to fuse multiple information expressed by D numbers. In our previous studies \cite{xydeng2017DNCR,deng2019d}, an exclusive conflict redistribution (ECR) rule was proposed to combine two D numbers, which is defined as below.

\begin{definition}\label{DefECRruleD1D2}
Let $D_1$ and $D_2$ are two D numbers defined on $\Theta$, the combination of $D_1$ and $D_2$, denoted by $D = D_1 \odot D_2$, is defined as
\begin{equation}\label{EqDefECRD1D2}
D(A) = \left\{ \begin{array}{l}
 0,\quad A = \emptyset  \\
 \frac{1}{{1 - K_D }}\left( \begin{array}{l}
 {\sum\limits_{B \cap C = A} {D_1 (B)D_2 (C)}} \;\; +  \\
 \sum\limits_{\scriptstyle B \cup C = A \hfill \atop
  \scriptstyle B \cap C = \emptyset  \hfill} {u(B,C)D_1 (B)D_2 (C)}  \\
 \end{array} \right),\quad A \ne \emptyset  \\
 \end{array} \right.
\end{equation}
with
\begin{equation}\label{EqDefECRD1D2KD}
K_D  = \sum\limits_{B \cap C = \emptyset } {\left( {1 - u(B,C)} \right)D_1 (B)D_2 (C)}
\end{equation}
where $A, B, C \subseteq {\Theta}$.
\end{definition}

From the definition of above ECR rule, we can find that the ECR rule is an extension of Dempster's rule in DST. The idea behind ECR rule is that the quantity $D_1 (B)D_2 (C)$ can not be totally treated as conflict if $B \cap C = \emptyset$, because of the property of generalized compatibility in Definition \ref{DefNonExclusiveness}. Therefore, in the ECR rule, with regard to the dispose of $D_1 (B)D_2 (C)$, a belief of ${u(B,C)D_1 (B)D_2 (C)}$ is assigned to $B \cup C$, and the rest ${\left( {1 - u(B,C)} \right)D_1 (B)D_2 (C)}$ is identified as exclusive conflict. It is easily proved that the ECR rule is a compatible extension of Dempster's rule, and it can be totally reduced to classical Dempster's rule in some special cases.

By analyzing the ECR rule, we find that this rule is commutative, i.e., ${D_1} \odot {D_2} = {D_2} \odot {D_1}$, but it does not satisfies the associativity, namely $({D_1} \odot {D_2}) \odot {D_3} \ne {D_1} \odot ({D_2} \odot {D_3})$, which may lead to some difficulties in combining multi-source information by using this rule. In order to solve the problem, in this paper an extended ECR rule is proposed to combine multiple D numbers, which essentially still complies with the original idea of ECR rule. The new ECR rule for multiple D numbers is defined as below.

\begin{definition}\label{DefECRruleD1D2D3DN}
Let $D_1, D_2, \cdots, D_n$ be $n$ D numbers defined on $\Theta$, the combination of them, denoted by $D = D_1 \odot D_2 \odot \cdots \odot D_n$, is defined as
\begin{equation}\label{EqDefECRD1D2D3DN}
D(A) = \left\{ \begin{array}{l}
 0,\quad A = \emptyset  \\
 \frac{1}{{1 - {K_D}}}\left( \begin{array}{l}
 \sum\limits_{ \cap {B_j} = A} {\prod\limits_{1 \le i \le n} {{D_i}({B_j})} }  +  \\
 \sum\limits_{\scriptstyle  \cup {B_j} = A \hfill \atop
  \scriptstyle  \cap {B_j} = \emptyset  \hfill} {\mathop {\min }\limits_{{B_k},{B_l} \in \{ {B_j}\} } \left\{ {u({B_k},{B_l})} \right\}\prod\limits_{1 \le i \le n} {{D_i}({B_j})} }  \\
 \end{array} \right),\quad A \ne \emptyset  \\
 \end{array} \right.
\end{equation}\label{}
with
\begin{equation}\label{EqDefECRD1D2D3DNKD}
{K_D} = \sum\limits_{ \cap {B_j} = \emptyset } {\left( {1 - \mathop {\min }\limits_{{B_k},{B_l} \in \{ {B_j}\} } \left\{ {u({B_k},{B_l})} \right\}} \right)\prod\limits_{1 \le i \le n} {{D_i}({B_j})} }
\end{equation}
where $A, B_j, B_k, B_l \subseteq {\Theta}$.
\end{definition}

Obviously, the new ECR rule can be reduced to the original ECR rule in Definition \ref{DefECRruleD1D2} if $n = 2$. It is noted that in the new ECR rule all D numbers are combined simultaneously to implement the fusion of multiple uncertain information modeled by D numbers. Compared with our previous solutions on combining multiple D numbers, for example induced ordering combination method \cite{deng2017fuzzy} and weighted average combination method \cite{deng2019d}, the new solution is more reasonable and natural.

\subsection{Belief and plausibility measures for D numbers}
Belief and plausibility measures are two equivalent forms of BPA in DST, which physically express the lower bound and upper bound of the support degree to each proposition $A$ in that BPA, respectively. In our previous studies, a belief measure and a plausibility measure for D numbers were developed \cite{deng2019d,IJISTUDNumbers}. However, after deep and further research, it was found that the previous developed belief and plausibility measures of D numbers are not satisfactory. For instance, they do not satisfy the property of $Bel(A) + Pl(\bar A) = 1$ for any subset $A$ belonging to the FOD. Facing that, in this paper we present a new pair of belief and the plausibility measures, and the properties of the new belief and plausibility measures are discussed.

\begin{definition}\label{DefBelDNT}
Let $D$ represent a D number defined on $\Theta$, the belief measure for $D$ is mapping
\[
Bel: 2^{\Theta}   \to [0,1]
\]
satisfying
\begin{equation}\label{EqBelMeasureDNT}
Bel(A) = \sum\limits_{B \subseteq A} {D(B)\left[ {1 - u(B,\bar A)} \right]}
\end{equation}
for any $A \subseteq \Theta$. Since ${u(B,\bar A)} = 1$ for any $B \not\subset A$ in terms of the definition of non-exclusive degree $u$, Eq. (\ref{EqBelMeasureDNT}) can be written as
\begin{equation}\label{EqBelMeasure}
Bel(A) = \sum\limits_{B \subseteq \Theta} {D(B)\left[ {1 - u(B,\bar A)} \right]}
\end{equation}
\end{definition}

\begin{definition}
Let $D$ represent a D number defined on $\Theta$, the plausibility measure for $D$ is mapping
\[
Pl: 2^{\Theta}  \to [0,1]
\]
satisfying
\begin{equation}
Pl(A) = \sum\limits_{B \cap A \ne \emptyset } {D(B)}  + \sum\limits_{B \cap A = \emptyset } {u(B,A)D(B)}
\end{equation}
where $A, B \subseteq \Theta$. Because $u(B ,A) = 1$ for $B  \cap A  \ne \emptyset$ in terms of the definition of non-exclusive degree $u$, the plausibility measure $Pl$ can also be written as
\begin{equation}\label{EqPlMeasureDNT}
Pl(A) = \sum\limits_{B \subseteq \Theta} {u(B,A) D(B)}
\end{equation}
\end{definition}

If letting ${\bf{Pl}}$ be the vector form of plausibility measure, and {\bf{D}} the vector form of a D number, according to Eq. (\ref{EqPlMeasureDNT}), we have the following relationship
\begin{equation}
{\bf{Pl}} = {\bf{D}} \cdot {\bf{U}}
\end{equation}

In terms of the above definitions, a D number ${\bf{D}}$, and its belief measure ${\bf{Bel}}$ and plausibility measure ${\bf{Pl}}$ are easily obtained. Moreover, if ${\bf{U}}$ is of full rank, {\bf{D}}, ${\bf{Bel}}$ and ${\bf{Pl}}$ have one-to-one correspondence to others. As similar as DST, $[Bel(A), Pl(A)]$ forms a belief interval about the possibility of $A$ in DNT. It is easy to find that the $Bel$ and $Pl$ for D numbers will degenerate to classical belief measure and plausibility measure in DST if the associated D number is a BPA in fact.

Some desirable properties are satisfied by the proposed measures.

\begin{theorem}\label{ThermBelPl}
\[
Bel(A) \le Pl(A)
\]
\end{theorem}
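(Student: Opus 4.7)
The plan is to prove the inequality term by term in the sum over $B \subseteq \Theta$. Specifically, I would write
\[
Pl(A) - Bel(A) = \sum_{B \subseteq \Theta} D(B)\bigl[u(B,A) - 1 + u(B,\bar A)\bigr]
\]
and show that each bracketed quantity is nonnegative whenever $D(B) > 0$. Since $D(\emptyset) = 0$ by Definition \ref{DefDNumbers}, the term $B = \emptyset$ drops out, so I only need to handle nonempty $B$.

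For any nonempty $B \subseteq \Theta$, the key step is to combine the triangle inequality (P4) from Definition \ref{DefNonExclusiveness} with generalized compatibility (P1). First, (P4) applied with $B_i = A$ and $B_j = \bar A$ gives
\[
u(B,A) + u(B,\bar A) \ge u(B, A \cup \bar A) = u(B, \Theta).
\]
Then, since $B$ is nonempty, $B \cap \Theta = B \ne \emptyset$, so (P1) forces $u(B,\Theta) = 1$. Chaining these yields $u(B,A) + u(B,\bar A) \ge 1$, which is exactly what is needed to make the bracket in the displayed sum nonnegative.

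Putting everything together, the term-by-term nonnegativity gives $Pl(A) - Bel(A) \ge 0$. The argument is short and uses essentially all the structural axioms of $u$ except symmetry: monotonicity is not needed here, but triangle inequality plus generalized compatibility do the work. I do not anticipate a significant obstacle; the only subtlety worth flagging is the boundary case $B = \emptyset$, which is resolved trivially by $D(\emptyset) = 0$ rather than by any property of $u$ (indeed (P5) tells us $u(\emptyset, A) = 0$, so the bracket would be negative there, but the coefficient $D(\emptyset)$ kills it).
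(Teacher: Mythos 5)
Your proof is correct and follows essentially the same route as the paper: the identical term-by-term decomposition $Pl(A)-Bel(A)=\sum_{B\subseteq\Theta}D(B)\left[u(B,A)+u(B,\bar A)-1\right]$, reduced to showing $u(B,A)+u(B,\bar A)\ge 1$ for every nonempty $B$. The only difference is the justification of that pointwise bound: the paper notes that a nonempty $B\subseteq\Theta=A\cup\bar A$ must meet $A$ or $\bar A$, so by (P1) one of the two terms is already $1$, whereas you chain the triangle inequality (P4), $u(B,A)+u(B,\bar A)\ge u(B,A\cup\bar A)=u(B,\Theta)$, with (P1) giving $u(B,\Theta)=1$. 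Both arguments are valid and equally short; yours spends one extra axiom but avoids the set-theoretic dichotomy, and your explicit dismissal of the $B=\emptyset$ term via $D(\emptyset)=0$ is a detail the paper leaves implicit (the paper also contains a harmless typo, asserting $u(B,A)+u(B,\bar A)-1\ge 1$ where $\ge 0$ is what is meant and needed).
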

\begin{proof}
\[
\begin{array}{l}
 Pl(A) - Bel(A) = \sum\limits_{B \subseteq \Theta } {D(B)u(B,A)}  - \sum\limits_{B \subseteq \Theta } {D(B)[1 - u(B,\bar A)]}  \\
 \quad \quad \quad \quad \quad \quad \quad  = \sum\limits_{B \subseteq \Theta } {D(B)[u(B,A) + u(B,\bar A) - 1]}  \\
 \end{array}
 \]

According to the definition of non-exclusive degree $u$, for any nonempty set $B$ either $u(B,A)$ or $u(B,\bar A)$ must be 1, therefore $u(B,A) + u(B,\bar A) - 1 \ge 1$ for any $B \ne \emptyset$. Thus
\[
Pl(A) \ge Bel(A)
\]
\end{proof}

In terms of the above proof of Theorem \ref{ThermBelPl}, we can further obtain that
\[
Pl(A) - Bel(A) = \sum\limits_{B \subseteq A} {D(B)u(B,\bar A)}  + \sum\limits_{B \subseteq \bar A} {D(B)u(B,A)}  + \sum\limits_{\scriptstyle B \cap A \ne \emptyset  \hfill \atop
  \scriptstyle B \cap \bar A \ne \emptyset  \hfill} {D(B)}
\]
which provides a simple way to calculate the length of interval $[Bel(A), Pl(A)]$ that is often used to represent the
degree of imprecision of $A$.

\begin{theorem}
\[Bel(A) + Pl(\bar A) = 1\]
\end{theorem}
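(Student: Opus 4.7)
The plan is to prove the identity by direct substitution of the two definitions and observing that the $u(B,\bar A)$ contributions cancel. Since the paper is working in the information-complete case here, I will use $\sum_{B\subseteq\Theta} D(B)=1$ as a hypothesis.

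First, I would apply Definition \ref{DefBelDNT} to write
\[
Bel(A) = \sum_{B \subseteq \Theta} D(B)\bigl[1 - u(B,\bar A)\bigr].
\]
Next, I would instantiate Eq.\ (\ref{EqPlMeasureDNT}) with argument $\bar A$ in place of $A$, giving
\[
Pl(\bar A) = \sum_{B \subseteq \Theta} u(B,\bar A)\, D(B).
\]
These are the two ingredients; no auxiliary lemma is needed.

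Then I would simply add the two expressions. Linearity of the sum over $B\subseteq\Theta$ lets me combine them termwise, and the coefficient of $D(B)$ becomes $[1-u(B,\bar A)] + u(B,\bar A) = 1$ for every $B$. Hence
\[
Bel(A) + Pl(\bar A) = \sum_{B \subseteq \Theta} D(B),
\]
which equals $1$ by the information-completeness assumption $Q(D)=1$ inherited from the surrounding discussion.

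There is essentially no obstacle: the definitions of $Bel$ and $Pl$ were engineered so that the $u(B,\bar A)$-weighted mass and its complement partition $D(B)$ exactly. The only point worth flagging is that the identity as stated requires $D$ to be information-complete; in the information-incomplete case the right-hand side would become $Q(D)$ rather than $1$. Since this subsection explicitly restricts attention to complete D numbers (with the incomplete case deferred to the later subsection), the statement is well-posed as written.
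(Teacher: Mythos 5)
Your proposal is correct and follows essentially the same route as the paper's own proof: expand $Bel(A)$ and $Pl(\bar A)$ from Eqs.\ (\ref{EqBelMeasure}) and (\ref{EqPlMeasureDNT}), combine termwise so the coefficient of each $D(B)$ is $[1-u(B,\bar A)]+u(B,\bar A)=1$, and conclude via $\sum_{B\subseteq\Theta}D(B)=1$. Your explicit remark that the identity needs information-completeness (otherwise the right-hand side is $Q(D)$) is a fair observation the paper leaves implicit, but it does not change the argument.
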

\begin{proof}
\[
\begin{array}{l}
 Bel(A) + Pl(\bar A) = \sum\limits_{B \subseteq \Theta } {D(B)[1 - u(B, \bar A)]}  + \sum\limits_{B \subseteq \Theta } {D(B)u(B, \bar A)}  \\
 \quad \quad \quad \quad \quad \quad \quad  = \sum\limits_{B \subseteq \Theta } {D(B)[1 - u(B, \bar A) + u(B, \bar A)]}  \\
 \quad \quad \quad \quad \quad \quad \quad  = \sum\limits_{B \subseteq \Theta } {D(B)}  = 1 \\
 \end{array}
 \]
\end{proof}

\begin{theorem}
\[
Bel(A) + Bel(\bar A) \le 1, \quad Pl(A) + Pl(\bar A) \ge 1
\]
\end{theorem}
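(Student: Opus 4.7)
The plan is to reduce both inequalities to the same elementary observation that powered the proof of Theorem~\ref{ThermBelPl}: for every nonempty $B \subseteq \Theta$, at least one of $B \cap A$ and $B \cap \bar A$ is nonempty, hence by generalized compatibility (P1) at least one of $u(B,A)$ and $u(B,\bar A)$ equals $1$, and consequently
\[
u(B,A) + u(B,\bar A) \;\ge\; 1.
\]
The case $B = \emptyset$ does not contribute to any of the sums involved because $D(\emptyset) = 0$ (and, from (P5), $u(\emptyset, A) = u(\emptyset, \bar A) = 0$ anyway), so we may always restrict attention to nonempty $B$ when the factor $D(B)$ is present.

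For the plausibility part I would substitute the form \eqref{EqPlMeasureDNT} into $Pl(A) + Pl(\bar A)$, combine the two sums into $\sum_{B \subseteq \Theta} D(B)[\,u(B,A) + u(B,\bar A)\,]$, apply the inequality above termwise, and conclude with $\sum_{B \subseteq \Theta} D(B) = 1$. This step is almost mechanical once Theorem~\ref{ThermBelPl}'s key lemma has been isolated.

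For the belief part I would use \eqref{EqBelMeasure} and write
\[
Bel(A) + Bel(\bar A) = \sum_{B \subseteq \Theta} D(B)\bigl[\,2 - u(B,\bar A) - u(B,A)\,\bigr],
\]
then observe that $2 - u(B,A) - u(B,\bar A) \le 1$ by the same inequality, so the whole expression is bounded above by $\sum_{B \subseteq \Theta} D(B) = 1$. An equivalent (and perhaps cleaner) presentation is to notice that $Bel(\bar A) = 1 - Pl(A)$ by the previous theorem, so the inequality $Bel(A) + Bel(\bar A) \le 1$ follows immediately from $Bel(A) \le Pl(A)$ already established in Theorem~\ref{ThermBelPl}; similarly $Pl(A) + Pl(\bar A) \ge 1$ follows from $Pl(\bar A) \ge Bel(\bar A) = 1 - Pl(A)$. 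I would actually prefer this second route in the write-up because it is shorter and highlights the duality $Bel(A) + Pl(\bar A) = 1$.

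There is really no main obstacle here; the only thing to double-check is the edge case $B = \emptyset$ in the belief formula, where $1 - u(\emptyset,\bar A) = 1$ but $D(\emptyset) = 0$ annihilates the term, so the bookkeeping remains consistent. Both inequalities are thus straightforward consequences of generalized compatibility together with the normalization $\sum_B D(B) = 1$ (we are in the information-complete setting).
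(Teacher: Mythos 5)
Your proposal is correct, and the route you say you would prefer in the write-up---deducing $Bel(A)+Bel(\bar A)\le 1$ from $Bel(A)\le Pl(A)$ together with the duality $Bel(\bar A)+Pl(A)=1$, and dually for $Pl$---is exactly the paper's own proof. The direct termwise argument you also sketch (using $u(B,A)+u(B,\bar A)\ge 1$ for nonempty $B$ and $\sum_{B\subseteq\Theta}D(B)=1$) is likewise valid, but adds nothing beyond the shorter duality argument.
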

\begin{proof}
Since $Bel(A) \le Pl(A)$ and $Bel(\bar A) + Pl( A) = 1$, we have $Bel(A) \le 1 - Bel(\bar A)$, hence
\[
Bel(A) + Bel(\bar A) \le 1
\]

Similarly, $Pl(A) + Pl(\bar A) \ge 1$ can be proved.
\end{proof}

\begin{theorem}
If $A \subseteq B$, then $Bel(A) \le Bel(B)$ and $Pl(A) \le Pl(B)$
\end{theorem}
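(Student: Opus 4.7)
The plan is to derive both inequalities directly by termwise comparison of the defining sums in Eq.~(\ref{EqBelMeasure}) and Eq.~(\ref{EqPlMeasureDNT}), relying on the monotonicity property (P3) of the non-exclusive degree $u$ from Definition~\ref{DefNonExclusiveness}. Since $D(C) \ge 0$ for every $C \subseteq \Theta$, any pointwise inequality between the coefficients multiplying $D(C)$ will lift to the sums without difficulty.

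For the plausibility inequality I would argue as follows. Assume $A \subseteq B$. Then by (P3), for every $C \subseteq \Theta$ we have $u(C,A) \le u(C,B)$. Multiplying by the nonnegative weight $D(C)$ and summing over $C \subseteq \Theta$ in the expression $Pl(A) = \sum_{C \subseteq \Theta} u(C,A)\, D(C)$ yields $Pl(A) \le Pl(B)$ immediately.

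For the belief inequality I would first pass to complements. Since $A \subseteq B$ implies $\bar B \subseteq \bar A$, property (P3) gives $u(C, \bar B) \le u(C, \bar A)$ for every $C \subseteq \Theta$, hence $1 - u(C, \bar A) \le 1 - u(C, \bar B)$. Multiplying by $D(C) \ge 0$ and summing over $C \subseteq \Theta$ in the formula $Bel(A) = \sum_{C \subseteq \Theta} D(C)[1 - u(C, \bar A)]$ yields $Bel(A) \le Bel(B)$.

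There is not really a hard step here; the proof is essentially an application of (P3) twice, once directly for $Pl$ and once after complementing for $Bel$. The only point that deserves attention is the sign flip in the belief case: the defining sum involves $1 - u(C, \bar A)$, so monotonicity of $u$ in its second argument must be combined with the set reversal $A \subseteq B \Leftrightarrow \bar B \subseteq \bar A$ in order to produce an inequality in the correct direction. Nonnegativity of $D$, which is guaranteed by the D-number definition, is what makes the termwise estimates transfer to the sums.
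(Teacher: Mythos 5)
Your proof is correct and takes essentially the same route as the paper: the paper establishes $Pl(A)\le Pl(B)$ exactly as you do, writing $Pl(B)-Pl(A)=\sum_{C\subseteq\Theta}D(C)\left[u(C,B)-u(C,A)\right]$ and invoking monotonicity (P3), and then dismisses the $Bel$ case with ``similarly.'' Your complementation step ($A\subseteq B\Rightarrow \bar B\subseteq \bar A$, hence $1-u(C,\bar A)\le 1-u(C,\bar B)$ termwise) is precisely the omitted ``similar'' argument, so you have simply filled in the detail the paper leaves implicit.
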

\begin{proof}
\[
\begin{array}{l}
 Pl(B) - Pl(A) = \sum\limits_{C \subseteq \Theta } {D(C)u(C,B)}  - \sum\limits_{C \subseteq \Theta } {D(C)u(C,A)}  \\
 \quad \quad \quad \quad \quad \quad  = \sum\limits_{C \subseteq \Theta } {D(C)[u(C,B) - u(C,A)]}  \\
 \end{array}
\]
Because $u(C,A) \le u(C,B)$ for $A \subseteq B$, then
\[
Pl(B) \ge Pl(A)
\]

Similarly, $Bel(B) \ge Bel(A)$ can be proved for $A \subseteq B$.
\end{proof}

In DST, given a mass function $m$, its corresponding belief measure $Bel _{m}$ can be obtained by means of Eq.(\ref{EqBelMeasureDST}), Shafer \cite{shafer1976mathematical} has proved that the belief measure $Bel _{m}$ is a belief function. As a generalization of DST, DNT also retain the characteristic.

\begin{definition}
Given FOD $\Theta$, a function $bel: 2^{\Theta}   \to [0,1]$ is a belief function if and only if

(1) $bel(\emptyset) = 0$;

(2) $bel(\Theta) = 1$;

(3) For all $A_1, \cdots, A_n \subseteq \Theta$,
\[bel({A_1} \cup  \cdots  \cup {A_n}) \ge \sum\limits_{\scriptstyle I \subseteq \{ 1, \cdots ,n\}  \hfill \atop
  \scriptstyle I \ne \emptyset  \hfill} {{{( - 1)}^{|I| + 1}}bel\left( {\bigcap\limits_{i \in I} {{A_i}} } \right)} \]
\end{definition}

\begin{theorem}\label{ThermBelBeliefFunction}
Belief measure $Bel$ in Definition \ref{DefBelDNT} is a belief function.
\end{theorem}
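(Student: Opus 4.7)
The plan is to verify, in turn, each of the three defining conditions of a belief function. Conditions (1) and (2) are immediate from the axioms of $u$: for (1), $Bel(\emptyset) = \sum_B D(B)[1 - u(B,\Theta)]$, and property P1 gives $u(B,\Theta) = 1$ whenever $B \ne \emptyset$, while the $B = \emptyset$ term vanishes because $D(\emptyset) = 0$; for (2), property P5 yields $u(B,\emptyset) = 0$ for every $B$, so $Bel(\Theta) = \sum_B D(B) = 1$ by information-completeness of $D$.

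All of the substantive work goes into condition (3). I would first use $D(B) \ge 0$ to reduce the inequality to a pointwise claim in $B$: it suffices to show, for each fixed $B \subseteq \Theta$,
\[
1 - u\bigl(B,\overline{A_1 \cup \cdots \cup A_n}\bigr) \;\ge\; \sum_{\emptyset \ne I \subseteq \{1,\ldots,n\}}(-1)^{|I|+1}\left[1 - u\!\left(B,\overline{\bigcap_{i\in I}A_i}\right)\right].
\]
Using the combinatorial identity $\sum_{\emptyset \ne I \subseteq \{1,\ldots,n\}}(-1)^{|I|+1}=1$ to cancel the constant summands, and applying De~Morgan's laws with the substitution $X_i = \bar A_i$, this is equivalent to the inclusion-exclusion-like inequality
\[
u\!\left(B,\bigcap_i X_i\right) \;\le\; \sum_{\emptyset \ne I}(-1)^{|I|+1}\, u\!\left(B,\bigcup_{i \in I} X_i\right).
\]

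I would then attack this by induction on $n$. The base case $n = 2$ asks for submodularity of $u(B,\cdot)$, i.e.\
\[
u(B,X_1) + u(B,X_2) \;\ge\; u(B,X_1 \cup X_2) + u(B,X_1 \cap X_2),
\]
and I expect this to be the main obstacle: P4 supplies only the weaker subadditivity on the left two terms, so submodularity will require a case split on how $B$ meets $X_1$, $X_2$, and $X_1 \cap X_2$, leveraging P1 (which forces $u(B,\cdot)=1$ as soon as the second argument meets $B$) and falling back on P3 in the one remaining case where $B$ is disjoint from $X_1 \cup X_2$. Under the fuzzy-relation realization of Theorem~\ref{ThermUfuzzyR} the step is automatic, because $u(B,\cdot)$ becomes a maximum over a fuzzy relation and $\max(a,b)+\min(a,b)=a+b$ applies. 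Once the $n=2$ lemma is in hand, the inductive step combines the hypothesis applied to $(X_1,\ldots,X_{n-1})$ with one further invocation of submodularity to splice in $X_n$, regrouping the $2^n-1$ alternating-signed terms to close the argument.
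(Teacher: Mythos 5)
Your handling of conditions (1) and (2) is correct, and your reduction of condition (3) to the pointwise inequality $u\bigl(B,\bigcap_i X_i\bigr) \le \sum_{\emptyset \ne I}(-1)^{|I|+1}u\bigl(B,\bigcup_{i\in I}X_i\bigr)$ (with $X_i=\bar A_i$, using $D(B)\ge 0$ and the extended form of $Bel$) is a legitimate sufficient condition. The gap is exactly at the step you flagged, and it cannot be closed from (P1)--(P5) alone: in the remaining case $B\cap(X_1\cup X_2)=\emptyset$, (P4) gives only $u(B,X_1)+u(B,X_2)\ge u(B,X_1\cup X_2)$, and no axiom lets you absorb the extra term $u(B,X_1\cap X_2)$; (P3) points the wrong way. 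In fact submodularity of $u(B,\cdot)$ is not a consequence of Definition~\ref{DefNonExclusiveness}. Take $\Theta=\{a,b,c,d\}$, set $u(S,T)=1$ whenever $S\cap T\ne\emptyset$, $u(S,\emptyset)=u(\emptyset,S)=0$, $u(\{a\},T)=u(T,\{a\})=0.9$ for every nonempty proper subset $T$ of $\{b,c,d\}$, and $u=1$ for all other disjoint nonempty pairs (including $u(\{a\},\{b,c,d\})=1$, which (P1) permits); one checks (P1)--(P5) directly. Yet with $B=\{a\}$, $X_1=\{b,c\}$, $X_2=\{c,d\}$ we get $u(B,X_1)+u(B,X_2)=1.8<1.9=u(B,X_1\cup X_2)+u(B,X_1\cap X_2)$. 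Worse, putting all mass on $\{a\}$, i.e.\ $D(\{a\})=1$, and taking $A_1=\{a,d\}$, $A_2=\{a,b\}$ gives $Bel(A_1)=Bel(A_2)=0.1$, $Bel(A_1\cap A_2)=0$, but $Bel(A_1\cup A_2)=0.1<0.2$, so no case analysis at the level of generality of (P1)--(P5) can deliver your lemma: some additional structure on $u$ is genuinely required.

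Two further points. First, even granting the $n=2$ lemma, your inductive step is not routine: $2$-alternation (submodularity) does not in general imply the full $n$-fold alternating inequality, so ``one further invocation of submodularity'' will not splice in $X_n$; complete alternation is strictly stronger. Second, both difficulties vanish under the realization you mention in passing: when $u$ is generated from a fuzzy relation by the max rule of Theorem~\ref{ThermUfuzzyR}, the map $u(B,\cdot)$ is maxitive ($u(B,X\cup Y)=\max\{u(B,X),u(B,Y)\}$ with $u(B,\emptyset)=0$), hence completely alternating, and your pointwise inequality holds for every $n$ at once with no induction. So your route does give a clean proof for max-generated $u$ --- and it is a genuinely different route from the paper's, which never isolates a submodularity statement but instead interchanges the two summations Shafer-style via the index sets $I(B)=\{i:\,B\subseteq A_i\}$. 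As a proof of the theorem for an arbitrary $u$ satisfying only (P1)--(P5), however, your key lemma fails, and repairing it requires an added hypothesis such as maxitivity of $u(B,\cdot)$.
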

\begin{proof}
It is obvious that
(1) $Bel(\emptyset ) = 0$ and (2) $Bel(\Theta) = 1$. Now let us prove the condition (3) of belief function.

In Shafer's book \cite{shafer1976mathematical}, there is a lemma:

If $A$ is a finite set, then
$\sum\limits_{B \subseteq A} {{{( - 1)}^{|B|}}}  = \left\{ \begin{array}{l}
 1\quad {\rm{if}}\;A = \emptyset  \\
 0\quad {\rm{otherwise.}} \\
 \end{array} \right.$

For all $A_1, \cdots, A_n \subseteq \Theta$, let $I(B) = \{ i|1 \le i \le n;\;B \subseteq {A_i}\}$. According to the above lemma, we have
\[\begin{array}{l}
 \sum\limits_{\scriptstyle I \subseteq \{ 1, \cdots ,n\}  \hfill \atop
  \scriptstyle I \ne \emptyset  \hfill} {{{( - 1)}^{|I| + 1}}Bel\left( {\bigcap\limits_{i \in I} {{A_i}} } \right)}  \\
 \quad \quad \quad \quad \quad \quad  = \sum\limits_{\scriptstyle I \subseteq \{ 1, \cdots ,n\}  \hfill \atop
  \scriptstyle I \ne \emptyset  \hfill} {{{( - 1)}^{|I| + 1}}\sum\limits_{B \subseteq \bigcap\limits_{i \in I} {{A_i}} } {D(B)[1 - u(B,\overline{\bigcap\limits_{i \in I} {{A_i}}} )]} }  \\
 \quad \quad \quad \quad \quad \quad  = \sum\limits_{\scriptstyle B \subseteq \Theta  \hfill \atop
  \scriptstyle I(B) \ne \emptyset  \hfill} {D(B)[1 - u(B,\overline{\bigcap\limits_{i \in I(B)} {{A_i}}} )]\sum\limits_{\scriptstyle I \subseteq I(B) \hfill \atop
  \scriptstyle I \ne \emptyset  \hfill} {{{( - 1)}^{|I| + 1}}} }  \\
 \quad \quad \quad \quad \quad \quad  = \sum\limits_{\scriptstyle B \subseteq \Theta  \hfill \atop
  \scriptstyle I(B) \ne \emptyset  \hfill} {D(B)[1 - u(B,\overline{\bigcap\limits_{i \in I(B)} { {A_i}}} )]\left( {1 - \sum\limits_{I \subseteq I(B)} {{{( - 1)}^{|I|}}} } \right)}  \\
 \quad \quad \quad \quad \quad \quad  = \sum\limits_{\scriptstyle B \subseteq \Theta  \hfill \atop
  \scriptstyle I(B) \ne \emptyset  \hfill} {D(B)[1 - u(B,\overline{\bigcap\limits_{i \in I(B)} { {A_i}}} )]}  \\
 \quad \quad \quad \quad \quad \quad  \le \sum\limits_{\scriptstyle B \subseteq \Theta  \hfill \atop
  \scriptstyle I(B) \ne \emptyset  \hfill} {D(B)[1 - u(B,\overline{\bigcup\limits_{i \in I(B)} { {A_i}}} )]}  \\
 \quad \quad \quad \quad \quad \quad  \le \sum\limits_{\scriptstyle B \subseteq \Theta  \hfill \atop
  \scriptstyle I = \{ 1, \cdots ,n\}  \hfill} {D(B)[1 - u(B,\overline{\bigcup\limits_{i \in I} {  {A_i}}} )]}  \\
 \quad \quad \quad \quad \quad \quad {\rm{ = }}Bel({A_1} \cup  \cdots  \cup {A_n}) \\
 \end{array}\]

Namely
\[Bel({A_1} \cup  \cdots  \cup {A_n}) \ge \sum\limits_{\scriptstyle I \subseteq \{ 1, \cdots ,n\}  \hfill \atop
  \scriptstyle I \ne \emptyset  \hfill} {{{( - 1)}^{|I| + 1}}Bel\left( {\bigcap\limits_{i \in I} {{A_i}} } \right)} \]

Therefore, the belief measure $Bel$, where $Bel(A) = \sum\limits_{B \subseteq A} {D(B)\left[ {1 - u(B,\bar A)} \right]}$ for any $A \subseteq \Theta$, is a belief function.
\end{proof}

DST is also called belief function theory because its belief measure $Bel_{m}$ is a belief function and many other concepts can be derived from $Bel_{m}$. Theorem \ref{ThermBelBeliefFunction} shows that the belief measure $Bel$ of D numbers is also a belief function, therefore we have more reasons to say that DNT is a compatible generalization of DST.

\subsection{Combination of incomplete information in DNT}
In above subsections, the D numbers are assumed to be information-complete. In this subsection, we will study the case of information-complete D numbers, and use the combination of incomplete information in DNT as an example to show how to handle the information-incomplete D numbers.

In our previous work on DNT \cite{deng2019d,IJISTUDNumbers}, for the information-incomplete case, an unknown event $X$ is introduced to enlarge the FOD from $\Theta$ to $\Theta  \cup \{X\}$, and to transform a D number with incomplete information represented by $D$ to information-complete D number $D_t$ such that
\begin{equation}
{\sum\limits_{B \subseteq {\Theta \cup \{X\}} } {D_t(B)} } = 1  \quad {\rm {and}} \quad
D_t(\emptyset ) = 0
\end{equation}
Note that there is not any restriction for the relationship between new introduced $X$ and original $\Theta$. Then, based on the new obtained information-complete D number, the ECR rule can be used on a new universe ${\Theta \cup \{X\}}$.

However, by further study, we find that there is a big difficulty to execute the above process: how to obtain $u (X ,A )$ required by the ECR rule, where $A \subseteq \Theta$, since there is not any restriction on $X$. For a totally unknown object $X$, any supposition on $u(X ,A )$ is controversial. In order to solve the puzzle, in this paper a new method for the combination of incomplete information in DNT is proposed. Within the proposed method, the concept of unknown event $X$ is clarified and redefined, and a new parameter $\delta$ is imported. Regarding $X$ and $\delta$, we have the following definition.

\begin{definition}
Given a FOD $\Theta$, $X$ is the set of any possible elements outside of $\Theta$, namely
\begin{equation}
X = \{ x|x \cap \Theta  = \emptyset ,x \ne \emptyset \}
\end{equation}
Moreover, $X$ and $\Theta$ are assumed to be completely exclusive, namely
\begin{equation}
u (S,F ) = 0,\;\forall S \subseteq X,F \subseteq \Theta
\end{equation}
\end{definition}

\begin{definition}
Given a FOD $\Theta$, $\delta$ is defined to represent the completeness degree of $\Theta$, and $\delta \in [0,1]$. If $\Theta$ is 100\% complete or exhaustive, then $\delta = 1$; Otherwise, $0 \le \delta < 1$, and the lower the completeness of $\Theta$ the smaller the value of $\delta$.
\end{definition}

From the above definitions, $X$ becomes a very clear conception, by contrast $\delta$ is a parameter needing to be estimated in concrete circumstances. Based on the defined $X$ and $\delta$, a new method to combine D numbers, especially two information-incomplete D numbers, denoted as $D_1$ and $D_2$, is proposed. If we need to combine multiple D numbers, the above proposed new version of ECR rule for $n$ D numbers can be directly used.

\begin{itemize}
  \item \textbf{(Step 1.)} In terms of new defined $X$ and $\delta$, transform the D number with incomplete information to an information-complete D number by operation $D_t = T_{new}(D)$
      \begin{equation}\label{DDTransformation}
      \left\{ {\begin{array}{*{20}{l}}
   {{D_t}(B) = D(B),\quad B \subset \Theta }  \\
   {{D_t}(\Theta) = D(\Theta ) + \delta \left[ {1 - Q(D)} \right]}  \\
   {{D_t}(X) = (1 - \delta )\left[ {1 - Q(D)} \right]}  \\
\end{array}} \right.
      \end{equation}
so that there is also
\begin{equation}
{\sum\limits_{B \subseteq {\Theta \cup \{X\}} } {D_t(B)} } = 1  \quad {\rm {and}} \quad
D_t(\emptyset ) = 0
\end{equation}

  \item \textbf{(Step 2.)} Suppose $D_1 = T_{new}(D_1)$ and $D_2= T_{new}(D_2)$, then their combination $D = D_1 \odot  D_2$ is obtained by using ECR rule's equations (\ref{EqDefECRD1D2}) and (\ref{EqDefECRD1D2KD}).
\end{itemize}

The properties of proposed combination method for information-incomplete D numbers are discussed as follows.

Suppose there are two D numbers $D_1$ and $D_2$ on FOD $\Theta$, since they may be information-incomplete, therefore $\sum\limits_{B \subseteq \Theta } {{D_1}(B)}  \le 1$, $\sum\limits_{C \subseteq \Theta } {{D_2}(C)}  \le 1$, and ${D_1}(\emptyset ) = {D_2}(\emptyset ) = 0$. The Q values of these two D numbers are $Q_1 = \sum\limits_{B \subseteq \Theta } {{D_1}(B)}$ and $Q_2 = \sum\limits_{C \subseteq \Theta } {{D_2}(C)}$, respectively. In terms of those information, a quantity denoted as $K_D^1$ can be calculated which represents the definitely known conflict between $D_1$ and $D_2$:
\[
K_D^1 = \sum\limits_{B \cap C = \emptyset } {(1 - {u}(B,C)){D_1}(B){D_2}(C)}
\]
where $B,C \subseteq \Theta$. Obviously, $K_D^1 \in [0,Q_1 Q_2]$.

Based on the proposed method, for an information-incomplete D number, it is firstly transformed to the information-complete situation through Eq. (\ref{DDTransformation}). Assume the completeness degree of FOD $\Theta$ is $\delta$. Hence, for $D_1$, the missing belief $(1-Q_1)$ is assigned as follows
\[\left\{ \begin{array}{l}
 {D_1}(\Theta ) = (1 - {Q_1})\delta  \\
 {D_1}(X) = (1 - {Q_1})(1 - \delta ) \\
 \end{array} \right.\]
Similarly, for $D_2$ we have
\[\left\{ \begin{array}{l}
 {D_2}(\Theta ) = (1 - {Q_2})\delta  \\
 {D_2}(X) = (1 - {Q_2})(1 - \delta ) \\
 \end{array} \right.\]

Since there are new beliefs assigned $\Theta$ and $X$ in $D_1$ and $D_2$, it leads to new conflict between the two D numbers. According to the definition of $X$, there is ${u}(S,F ) = 0,\;\forall S \subseteq X,F \subseteq \Theta$, hence the conflict causing by new assigned beliefs in D numbers can be obtained
\[
K_D^2 = {Q_2}(1 - {Q_1})(1 - \delta) + {Q_1}(1 - {Q_2})(1 - \delta) + 2(1 - {Q_1})(1 - {Q_2})\delta(1 - \delta)
\]
which can also be written as
\[K_D^2 = {D_1}(X) + {D_2}(X) - 2 {D_1}(X){D_2}(X)\]

All conflict between $D_1$ and $D_2$ are entirely counted in $K_D^1$ and $K_D^2$, hence the conflict coefficient $K_D$ in Eq. (\ref{EqDefECRD1D2KD}) can be expressed as
\[
{K_D} = K_D^1 + K_D^2
\]
Naturally, $1 - K_D^1 - K_D^2 \ge 0$. Then, by using the ECR rule, the result of combining $D_1$ and $D_2$ (after transformation), denoted as $D$, can be obtained. In $D$, we will have
\begin{equation}\label{EqDX}
 \centering
D(X) = \frac{{D_{1}(X)D_{2}(X)}}{{1 - {K_D}}} = \frac{{(1 - {Q_1})(1 - {Q_2}){{(1 - \delta)}^2}}}{{1 - K_D^1 - K_D^2}}
\end{equation}
where $D_1(X) = (1-\delta)(1-Q_1)$ and $D_2(X) = (1-\delta)(1-Q_2)$. For information-incomplete D numbers, in the combination result the final $D(X)$ is the concerned in this paper. Some properties related to $D(X)$ in the combination result are given as follows.

\begin{property}
$D(X)\propto K_D^1$ and $D(X)\propto K_D^2$.
\end{property}

This property is evident according to Eq. (\ref{EqDX}).

\begin{property}
$D(X)\propto 1/\delta$
\end{property}

\begin{proof}
In terms of Eq. (\ref{EqDX}), by calculating the partial derivative of $D(X)$ with respect to $\delta$, we have

\[
\frac{{\partial D(X)}}{{\partial \delta }} = \frac{{(1 - {Q_1})(1 - {Q_2})(1 - \delta )(\delta {Q_1} + \delta {Q_2} - {Q_1} - {Q_2} - 2\delta  + 2K_D^1)}}{{{{(1 - K_D^1 - K_D^2)}^2}}}
\]

Let $L = \delta {Q_1} + \delta {Q_2} - {Q_1} - {Q_2} - 2\delta  + 2K_D^1$. Since $K_D^1 \in [0,Q_1 Q_2]$, therefore $L$ gets the maximum value when $K_D^1 = Q_1 Q_2$, namely
\[\begin{array}{l}
 \max \;L = \delta {Q_1} + \delta {Q_2} - {Q_1} - {Q_2} - 2\delta  + 2{Q_1}{Q_2} \\
 \quad \quad \;\;\; = (\delta  + {Q_1})({Q_2} - 1) + (\delta  + {Q_2})({Q_1} - 1) \\
 \quad \quad \;\;\; \le 0 \\
 \end{array}\]
where $Q_1, Q_2, \delta \in [0,1]$. Thus, $\frac{{\partial D(X)}}{{\partial \delta }} \le 0$ holds, which implies that $D(X)$ is negatively correlated with $\delta$. Especially, $D(X) = 0$ while $\delta = 1$, and $D(X) = \frac{{(1 - {Q_1})(1 - {Q_2})}}{{1 - K_D^1 - {Q_1}(1 - {Q_2}) - {Q_2}(1 - {Q_1})}}$ while $\delta = 0$.
\end{proof}

\begin{property}
$D(X)\propto 1/Q_{1}$ and $D(X)\propto 1/Q_{2}$
\end{property}

\begin{proof}
By calculating the partial derivative of $D(X)$ with respect to $Q_1$, we have
\[
\frac{{\partial D(X)}}{{\partial Q_1 }} = \frac{{(1 - {Q_2}){{(1 - \delta )}^2}(\delta {Q_2} - {Q_2} - \delta  + K_D^1)}}{{{{(1 - K_D^1 - K_D^2)}^2}}}
\]

Since $K_D^1 \in [0,Q_1 Q_2]$ and $Q_1, Q_2, \delta \in [0,1]$, it can be easily obtained
\[
\frac{{\partial D(X)}}{{\partial Q_1}} \le 0
\]

Thus, $D(X)$ is negatively correlated with $Q_1$. Especially, $D(X) = 0$ if $Q_1 = 1$, $D(X) = \frac{{(1 - {Q_2}){{(1 - \delta )}^2}}}{{1 - K_D^1 - {Q_2}(1 - \delta ) - 2\delta (1 - {Q_2})(1 - \delta )}}$ if $Q_1 = 0$.

Similarly, $D(X)\propto 1/Q_{2}$ can also be proved.
\end{proof}

\begin{property}\label{PropertyDXD1D2}
Let $1 - K_D \ne 0$, then

(1) $D (X) = 0$, if and only if $D_1(X) = 0$ or $D_2(X) = 0$;

(2) $D (X) = D_{1}(X)$ if $1-K_D = D_{2}(X)$;

(3) $D (X) = D_{2}(X)$ if $1-K_D = D_{1}(X)$;

(4) $D (X) > D_{1}(X)$, if $1-K_D < D_{2}(X)$ and $D_1(X) \ne 0$;

(5) $0 < D (X) < D_{1}(X) $, if $1-K_D > D_{2}(X) > 0$ and $D_1(X) \ne 0$;

(6) $D (X) > D_{2}(X)$, if $1-K_D < D_{1}(X)$ and $D_2(X) \ne 0$;

(7) $0 < D (X) < D_{2}(X) $, if $1-K_D > D_{1}(X) > 0$ and $D_2(X) \ne 0$;

(8) $D (X) = 1$, if $1-K_D = 1$ and $D_1(X) \ne 0$ and $D_2(X) \ne 0$.
\end{property}

According to $K_D = K_D^1 + K_D^2$, $K_D^2 = {D_1}(X) + {D_2}(X) - 2 {D_1}(X){D_2}(X)$, and $D(X) = \frac{{D_{1}(X)D_{2}(X)}}{{1 - {K_D}}}$, this property can be proved easily. In terms of the property, the distribution of $D(X)$ respecting to $1 - K_D$ is obtained, as shown in Figure \ref{FigDXD1D2} in which supposing $0 < D_1(X) \le D_2(X)$.

 \begin{figure}[htbp]
   \centering
    \includegraphics[scale=0.65]{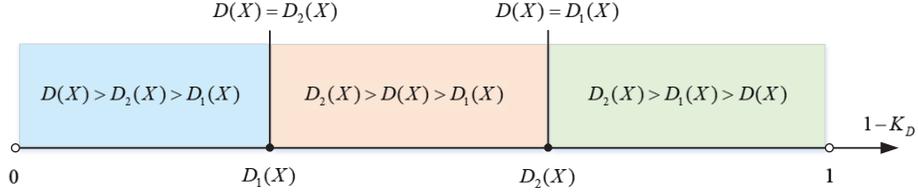}
    \caption{The distribution of $D(X)$ with to $1 - K_D$ separated by $D_1(X)$ and $D_2(X)$}
    \label{FigDXD1D2}
 \end{figure}

Moreover, based on Property \ref{PropertyDXD1D2}, it is easy to derive another property as follows where the relationship between $D(X)$ and $(1-Q_1)$, $(1- Q_2)$ is displayed.

\begin{property}
Let $0 < 1 - K_D < 1$, $D_1(X) \ne 0$ and $D_2(X) \ne 0$, then

(1) $D (X) = (1-Q_1)$ if $1-K_D = (1-\delta) D_{2}(X)$;

(2) $D (X) = (1-Q_2)$ if $1-K_D = (1-\delta) D_{1}(X)$;

(3) $D (X) > (1 - Q_1)$ if $1-K_D < (1-\delta) D_{2}(X)$;

(4) $D (X) > (1 - Q_2)$ if $1-K_D < (1-\delta) D_{1}(X)$;

(5) $D (X) < (1 - Q_1)$ if $1-K_D > (1-\delta) D_{2}(X)$;

(6) $D (X) < (1 - Q_2)$ if $1-K_D > (1-\delta) D_{1}(X)$.
\end{property}

As same as above, Figure \ref{FigDXQ1Q2} gives the distribution of $D(X)$ with respect to $1-K_D$, but shows the relationship between $D(X)$ and $(1-Q_1)$, $(1- Q_2)$, in which assume $0 < D_1(X) \le D_2(X)$.

 \begin{figure}[htbp]
   \centering
    \includegraphics[scale=0.65]{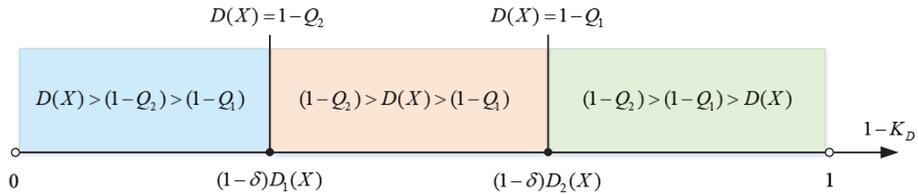}
    \caption{The distribution of $D(X)$ with to $1 - K_D$ separated by $1 - Q_1$ and $1 - Q_2$}
    \label{FigDXQ1Q2}
 \end{figure}

Having the above several properties, the characteristic of the proposed method for combining information-incomplete D numbers is basically clear, the imported $X$ to represent unknown is not completely unknown to us now.

\section{Conclusion}\label{SectConclusion}
In the paper, some basic concepts, definitions, and methods, including the exclusiveness, combination rules, belief and plausibility measures, combination of incomplete information, in the theoretical framework of DNT are studied. These issues are very important in establishing a perfect and systematic DNT. The research in this paper strengthens the mathematical foundation of DNT. In the future study, we will further enrich the theoretical and practical research of DNT.

%





\bibliographystyle{elsarticle-num}
\bibliography{references}







\end{document}